\newcommand{\argmax}{\text{argmax}}
\newtheorem{thm}{Theorem}[section]
\newtheorem{defn}[thm]{Definition}
\begin{document}
\date{}

\title{Spatially regularized active diffusion learning for high-dimensional images}

\author{James M. Murphy\\
Department of Mathematics\\
Tufts University\\
Medford, MA 02155, USA\\
\texttt{jm.murphy@tufts.edu}
}

\maketitle

\bigskip

\begin{abstract}
An active learning algorithm for the classification of high-dimensional images is proposed in which spatially-regularized nonlinear diffusion geometry is used to characterize cluster cores.  The proposed method samples from estimated cluster cores in order to generate a small but potent set of training labels which propagate to the remainder of the dataset via the underlying diffusion process.  By spatially regularizing the rich, high-dimensional spectral information of the image to efficiently estimate the most significant and influential points in the data, our approach avoids redundancy in the training dataset.  This allows it to produce high-accuracy labelings with a very small number of training labels.  The proposed algorithm admits an efficient numerical implementation that scales essentially linearly in the number of data points under a suitable data model and enjoys state-of-the-art performance on real hyperspectral images.
\end{abstract}

\section{Introduction}

Machine learning is revolutionizing the sciences.  With recent advances in data collection, hardware, and algorithms, automated methods have achieved near-human or better performance on a range of problems.  However, in many cases advances in machine learning are based on new architectures for deep neural networks, which while sometimes extremely effective, require large training sets of labeled data points to predict well.  While large training sets are available in some scientific communities, others do not lend themselves as naturally to this paradigm.  For example, producing the label of a pixel in a remotely sensed image may require deploying a human analyst to the site where the image was captured.  It is thus unrealistically expensive to produce the kind of large, labeled training datasets necessary to deploy many state-of-the-art supervised learning methods.  It is imperative instead to develop methods that achieve strong performance with only a very small number of labeled data points.  

In contrast to \emph{supervised} methods which demand large training sets \cite{Scholkopf2001_Learning, Goodfellow2016_Deep}, \emph{unsupervised} methods \cite{Friedman2001_Elements} and \emph{semi-supervised} \cite{Chapelle2006_Semi} methods require only unlabeled points or a small number of labeled points together with many unlabeled points, respectively.  In particular, \emph{active learning} \cite{Settles2009_Active} is the semi-supervised learning paradigm in which the algorithm may query certain data points for labels; this is in contrast to traditional supervised or semi-supervised algorithms in which the labeled training points are selected uniformly at random.  The crucial questions for active learning are how use the existing labeled and unlabeled data to select the query points, and how to propagate the (potentially very small in number) labeled points to the rest of the unlabeled data.

This article proposes an active learning method for high-dimensional image data with three major innovations.  First, \emph{spatially regularized diffusion geometry} \cite{Coifman2005} is computed in order to efficiently capture the geometry of the data.  The resulting diffusion distances \cite{Coifman2006} characterize similarities in the data not just with respect to the high-dimensional spectral structure of the data, but also with respect to the underlying spatial structure.  The second major innovation consists in \emph{using cluster core inferences derived from the spatially regularized diffusion geometry to query points for labels}.  These labels are then propagated to the rest of the data using the diffusion distances in a spatially-regularized manner.  Unlike many existing algorithms for active learning of high-dimensional images, the proposed \emph{spatially regularized learning by active nonlinear diffusion (SR LAND)} method is cluster-driven, and does not require any form of pre-training or an initial random training set.  This approach is able to efficiently combine the unsupervised structure with a potent set of carefully chosen training points, and can achieve high-accuracy results with a very small number of training labels.  The third major innovation is \emph{an implementation of the proposed method that scales essentially linearly in the number of points in the image under a suitable data model}; this low complexity makes it appropriate for large datasets.  We evaluate SR LAND on real hyperspectral images \cite{Eismann2012_Hyperspectral}, understood both as $n_{1}\times n_{2}\times D$ tensors (with two spatial dimensions $n_{1}$ and $n_{2}$ and a spectral dimension $D$) and as $D$-dimensional point clouds.  A software implementation of the proposed algorithm accompanies this article\footnote{\url{https://jmurphy.math.tufts.edu/Code/}}.

The remainder of this article is organized as follows.  Section \ref{sec:Background} provides background on active learning and on the main mathematical tool of the proposed algorithm: the diffusion geometry of high-dimensional data.  The proposed algorithm is introduced in Section \ref{sec:Algorithm}.  Experimental analysis on real hyperspectral images and comparisons with benchmark and state-of-the-art active learning methods appear in Section \ref{sec:Experiments}.  Theoretical and empirical analysis of the key parameters of the proposed method and its computational complexity and runtime are also in Section \ref{sec:Experiments}.  We conclude and discuss future research directions in Section \ref{sec:Conclusions}.

\section{Background}\label{sec:Background}

In order to introduce our proposed method, we review active learning and its applications to high-dimensional image processing, as well as diffusion geometry.

\subsection{Background on Active Learning}\label{subsec:BackgroundAL}

In order to produce efficient prediction algorithms in the low-training regime, active learning proposes to carefully determine which points should be queried for labels.  Broadly, there are two major approaches to determining which points to query for labels \cite{Dasgupta2011_Two}.  The first approach considers the active learning problem as a supervised learning problem, but with very few labels.  This frames active learning as the question of how to choose the labels to have maximal impact in improving the performance of the classification algorithm.  Thinking of supervised learning as choosing the best among a large hypothesis class of possible classifiers, active learning in this context seeks to winnow down the hypothesis class as rapidly as possible as a function of the number of labeled samples.  In the specific context of support vector machines (SVM), this means querying for labels points which will move the class boundaries most significantly in an error-reducing manner; we refer to this class of active learning methods as \emph{boundary-based active learning methods} \cite{Cohn1994_Improving, Balcan2007_Margin, Castro2008_Minimax, Dasgupta2008_General, Koltchinskii2010_Rademacher, Hanneke2012_Activized, Minsker2012_Plug, Hanneke2015_Minimax, Gelbhart2019_Relationship}.

The second major approach to active learning is to consider it as unsupervised learning, but in which a few training labels will be provided.  This frames active learning as the question of how to use the latent cluster structure in the data to determine which points will have maximum impact in propagating their labels to the rest of the data.  This type of learning is called \emph{cluster-based active learning}, because it leverages cluster structure in the data---learned without supervision---to decide which points to query for labels.  Approaches to cluster-based active learning often construct a hierarchical tree, with, for example, single or average-linkage clustering \cite{Friedman2001_Elements}.  This tree is then queried in an iterative manner---more points are queried from branches that have ambiguous labels, while fewer (or no) additional points are queried from label-homogeneous branches \cite{Dasgupta2008_Hierarchical, Urner2013_PLAL}.  Alternative tree constructions have also been proposed, including those based on manifold learning methods \cite{Maggioni2019_Learning}, which may better capture geometric structures in the data as well as improve robustness to high-dimensional noise.  

In the context of high-dimensional hyperspectral image analysis, a range of active learning approaches have been developed.  Most are boundary-based methods \cite{Rajan2008_Active, Tuia2009_Active, Li2010_Semisupervised, Sun2015_Active, Murphy2018_Iterative}, which typically require an initial set of random training data, or some form of pre-training.  Active learning methods may incorporate additional information particular to images, for example physical knowledge about the sensor or a general spatial regularity constraint.  Indeed, data in high-dimensional images often admit useful regularity: spatially nearby points are generally likely to share the same class label. Incorporating spatial regularity into querying algorithms often improves active learning algorithms for hyperspectral images.  

\subsection{Background on Diffusion Geometry}\label{subsec:BackgroundDG}

A major challenge in analyzing high-dimensional data in $\mathbb{R}^{D}$ is that traditional Euclidean distances cease to be meaningful; this is one manifestation of the curse of dimensionality \cite{Vershynin2018_High}.  However, when high-dimensional data admits low-dimensional structure, dimension-reduction methods and data-dependent distances may glean this structure.  When data lies near a low-dimensional subspace, classical methods such as principal component analysis \cite{Friedman2001_Elements} and Mahalanobis distances \cite{Mahalanobis1936_Generalized} capture the underlying structure in a manner that avoids the curse of dimensionality.  In the case of data that lies near a $d$-dimensional manifold, $d\le D$, \emph{manifold learning methods} may be used to capture this intrinsically low-dimensional but nonlinear structure \cite{Belkin2003_Laplacian, Maaten2008_Visualizing}.  

The \emph{diffusion geometry} of high-dimensional data captures latent, low-dimensional structure through Markovian random walks at different time scales \cite{Coifman2005, Coifman2006}.  For discrete data $X=\{x_{i}\}_{i=1}^{n}\subset\mathbb{R}^{D}$, an undirected, weighted graph is constructed with nodes corresponding to points in $X$ and weights between $x_{i}, x_{j}$ stored in a symmetric weight matrix $W\in\mathbb{R}^{n\times n}$.  The weights are commonly constructed by setting $W_{ij}=\mathcal{K}(x_{i},x_{j})$ for some kernel function $\mathcal{K}:\mathbb{R}^{D}\times\mathbb{R}^{D}\rightarrow [0,\infty)$, typically the Gaussian kernel $\mathcal{K}(x_{i},x_{j})=\mathcal{K}_{\sigma}(x_{i},x_{j})=\exp(-\|x_{i}-x_{j}\|_{2}^{2}/\sigma^{2})$ for some choice of scaling parameter $\sigma>0$.  In order to induce computationally advantageous sparsity in $W$, a point $x_{i}$ may only be connected to its $k$-nearest $\ell^{2}$ neighbors, for some $k=O(\log(n))$.  

Given the weight matrix $W$, a Markov diffusion operator $P\in\mathbb{R}^{n\times n}$ is defined on $X$ by normalizing $W$ to have all rows summing to 1.  Indeed, let $D\in\mathbb{R}^{n\times n}$ be the diagonal degree matrix for $W$: $D_{ii}=\sum_{j=1}^{n}W_{ij}$.  Then $P=D^{-1}W$ is a Markov transition matrix with left multiplication.  Assuming that the underlying graph $\mathcal{G}=(X,W)$ is connected and aperiodic, $P$ has a unique stationary distribution $\pi\in\mathbb{R}^{1\times n}$: $\pi P =\pi$.  

The diffusion distances for the dataset $X$ are derived from $P$, which encodes the latent diffusion geometry of $X$.  For a time $t\ge0$, the \emph{diffusion distance} between $x_{i}$ and $x_{j}$ is \begin{equation}\label{eqn:DiffusionDistances}D_{t}(x_{i},x_{j})=\sqrt{\sum_{k=1}^{n}\left((P^{t})_{ik}-(P^{t})_{jk}\right)^{2}\frac{1}{\pi_{k}}}.\end{equation}Intuitively, points $x_{i}, x_{j}$ are close in diffusion distance at time $t$ if the transition probabilities for $x_{i}$ and $x_{j}$ are similar at time $t$.  Note that $\lim_{t\rightarrow\infty}P^{t}=\mathbbm{1}_{n\times 1}\pi$, so that $D_{t}(x_{i},x_{j})\rightarrow0$ uniformly for all $x_{i},x_{j}$.  For data with low-dimensional cluster structure, a range of times $t$ reveals this cluster structure \cite{Maggioni2019_Learning_JMLR}.

Computing (\ref{eqn:DiffusionDistances}) is inefficient for large data; indeed, computing $D_{t}(x_{i},x_{i})$ for a single pair of points $x_{i}, x_{j}$ is $O(n)$, so computing the full $n\times n$ matrix of pairwise diffusion distances is $O(n^{3})$.  This motivates a spectral formulation of diffusion distances.  Indeed, while $P$ is not symmetric, it is diagonally conjugate to a symmetric matrix: $D^{1/2}PD^{-1/2}=D^{-1/2}WD^{-1/2}$.  In particular, the right eigenvectors of $P$ form an orthonormal basis for the weighted sequence space $\ell^{2}(1/\sqrt{\pi})$, so that diffusion distances may be computed entirely in terms of the (right) eigenvectors $\{\psi_{k}\}_{k=1}^{n}\subset\mathbb{R}^{n\times1}$ of $P$ and the corresponding eigenvalues $\{\lambda_{k}\}_{k=1}^{n}\subset (-1, 1]$, sorted so that $1=\lambda_{1}>|\lambda_{2}|\ge\dots\ge|\lambda_{n}|\ge 0$: \begin{equation}\label{eqn:DiffusionDistancesSpectralFormulation}D_{t}(x_{i},x_{j})=\sqrt{\sum_{k=1}^{n}\lambda_{k}^{2t}\left(\psi_{k}(x_{i})-\psi_{k}(x_{j})\right)^{2}},\end{equation}where $\psi_{k}(x_{i})=(\psi_{k})_{i}$.  

The eigenvectors contribute to the diffusion distances depending on their eigenvalues.  Indeed, since the eigenvalues are sorted in decreasing modulus, for $t$ sufficiently large, only the first small number of terms contribute to (\ref{eqn:DiffusionDistancesSpectralFormulation}).  This motivates truncating the expansion at a suitable $m\le n$: \begin{equation}\label{eqn:DiffusionDistancesSpectralFormulationTruncated}D_{t}(x_{i},x_{j})\approx\sqrt{\sum_{k=1}^{m}\lambda_{k}^{2t}\left(\psi_{k}(x_{i})-\psi_{k}(x_{j})\right)^{2}}.\end{equation}  In addition to reducing the number of eigenvectors necessary to estimate $D_{t}$, removing the eigenvalues close to 0 has the effect of denoising diffusion distances via a low-pass filter on the eigenvectors \cite{Trillos2018_Error}.  In Section \ref{subsec:ComplexityRuntime}, we will show that under suitable assumptions on the data, the eigenvector truncation (\ref{eqn:DiffusionDistancesSpectralFormulationTruncated}) allows all $O(1)$ $D_{t}$-nearest neighbors to be computed with essentially linear complexity in $n$.

\section{Spatially Regularized Learning by Active Nonlinear Diffusion}\label{sec:Algorithm}

The SR LAND method consists of three phases.  First, the unsupervised structure of the data is learned with diffusion geometry, in order to identify homogeneous regions in the data, namely regions that are close in diffusion distance to estimated modes in the data.  Second, the unsupervised structure is used to actively query points that are most likely to significantly impact the data labeling.  Third, these actively sampled training points are propagated to the rest of the dataset according to the underlying diffusion geometry of the data.  
 
The first stage consists in estimating cluster modes in a manner pioneered by the \emph{learning by unsupervised nonlinear diffusion (LUND)} algorithm \cite{Maggioni2019_Learning_JMLR} and its variants \cite{Murphy2018_Diffusion, Murphy2019_Spectral, Murphy2019_Unsupervised}, in which unsupervised clustering is performed by combining empirical density estimation and diffusion geometry.  Indeed, let $X=\{x_{i}\}_{i=1}^{n}\subset\mathbb{R}^{D}$ be a $D$-dimensional image, realized as a point cloud.  A random Markov diffusion is constructed on $X$ as in Section \ref{subsec:BackgroundDG}, but with the important constraint that the underlying weight matrix is a nearest neighbors graph with respect to the \emph{spatial structure} in the data.   This constraint significantly improves the unsupervised clustering of hyperspectral images \cite{Murphy2019_Spectral}.

More precisely, for some radius $r>0$, let $B_{r}(x_{i})$ be the spatial ball around the point $x_{i}$, i.e. the points whose spatial coordinates are within $\ell^{2}$ distance $r$ of the coordinates of $x_{i}$.  Then consider the weight matrix $W$ with \begin{equation}\label{eqn:W}W_{ij}= \begin{cases}
        \exp\left(\frac{-\|x_{i}-x_{j}\|_{2}^{2}}{\sigma^{2}}\right), \ x_{i}\in B_{r}(x_{j}) \text{ or } x_{j}\in B_{r}(x_{i}),\\
       0,  \text{ else.}\\
        \end{cases}\end{equation}The corresponding Markov transition matrix $P=D^{-1}W$ has a unique stationary distribution for $r>0$, and hence we can define diffusion distances as in (\ref{eqn:DiffusionDistances}) and (\ref{eqn:DiffusionDistancesSpectralFormulation}).  
        
Once the diffusion distances are computed, the empirical density of each data point is computed using a kernel density estimator (KDE) $f_{\sigma_{0}}$, with $\sigma_{0}>0$ a bandwidth parameter.  We use the Gaussian KDE, though others could be used.  For a positive integer $k$, we compute for each $x_{i}$ the set of nearest neighbors to $x_{i}$ in the spectral domain (i.e. in $\mathbb{R}^{D}$), $NN_{k}^{\text{spec}}(x_{i}).$  The KDE is then computed over $NN_{k}^{\text{spec}}$: \begin{equation*}\tilde{p}(x_{i})=\sum_{x\in NN_{k}^{\text{spec}}(x_{i})}\exp(-\|x-x_{i}\|_{2}^{2}/\sigma_{0}^{2}).\end{equation*}We normalize the KDE to have $\ell^{1}$ norm 1: $p(x_{i})=\tilde{p}(x_{i})/\left(\sum_{i=1}^{n}\tilde{p}(x_{i})\right)$.  In all experiments, the bandwidth $\sigma_{0}$ is set adaptively to be half the mean distance between all points and their $k$ nearest neighbors, and we use $k=100$ nearest neighbors.

A crucial step of the LUND approach to clustering is to characterize clusters according to their modes, which are both high density and also far in diffusion distance from other points of high density.  Compared to characterizing modes simply in terms of density \cite{Cheng1995_Mean, Ester1996_Density} or in terms of density and being $\ell^{2}$ far from other high density points \cite{Rodriguez2014_Clustering}, the LUND characterization is highly robust to non-convexity and nonlinearity in the underlying cluster structure \cite{Maggioni2019_Learning_JMLR}.  The cluster mode characterization is achieved by considering \begin{equation}\label{eqn:Rho}\tilde{\rho}_{t}(x_{i})=
\begin{cases}
\displaystyle\max_{x\in X}D_{t}(x_{i},x), &\text{ if } x_{i}=\argmax_{y\in X}p(y),\\
\displaystyle\min_{p(x)\ge p(x_{i})} D_{t}(x_{i},x), &\text{ else},
\end{cases}
\end{equation} then normalizing as $\rho_{t}(x_{i})=\tilde{\rho}_{t}(x_{i})/(\max_{1\le i \le n}\tilde{\rho}_{t}(x_{i}))$.  Points with large $\rho_{t}$ values are far in diffusion distance from other high density points, which suggests that if $\rho_{t}(x_{i})$ is large, then it is either a cluster mode, or an outlier which is separated from the rest of the data, but not of value as a cluster exemplar.  

This leads us to consider $\mathcal{D}_{t}(x_{i})=p(x_{i})\rho_{t}(x_{i})$ as a measure of cluster modality, since points which maximize $\mathcal{D}_{t}$ must simultaneously be high density (i.e. not an outlier) and far from other high-density points in a manner which is robust to the underlying geometry in the data.  The estimation of data modes in terms of $\mathcal{D}_{t}$ is the first part of the proposed active learning algorithm; see Algorithm \ref{alg:ModeDetection}.  In the context of active learning, the modes (i.e. the maximizers of $\mathcal{D}_{t}$) are points near cluster centers, which are significant because they characterize cluster cores as the points $D_{t}$-near the modes.  These regions may be confidently classified with a single label, so it is valuable to know what that label is.

\RestyleAlgo{algoruled}
\LinesNumbered
\begin{algorithm}[!htb]
	\caption{\label{alg:ModeDetection}Spatially-Regularized Diffusion Geometric Mode Detection }
	\textbf{Input:} $X=\{x_{i}\}_{i=1}^{n}\subset\mathbb{R}^{D}$ (data), $r\in\mathbb{Z}^{+}$ (spatial radius), $k\in\mathbb{Z}^{+}$ (density nearest neighbors), $m\in\mathbb{Z}^{+}$ (number of eigenvectors), $M\in\mathbb{Z}^{+}$ (number of modes)\\
	Construct $W$ as in (\ref{eqn:W}), then normalize to yield $P$.\\
	Compute the $m$ principal eigenpairs $\{(\lambda_{i},\psi_{i})\}_{i=1}^{m}$ of $P$.\\
	Compute the kernel density estimator $\{p(x_{i})\}_{i=1}^{n}$.\\
	Compute $\{\rho_{t}(x_{i})\}_{i=1}^{n}$ as in (\ref{eqn:Rho}).\\
	Estimate the data modes $\{x_{i}^{*}\}_{i=1}^{M}$ as the $M$ maximizers of $\mathcal{D}_{t}(x_{i})=p(x_{i})\rho_{t}(x_{i})$.\\
	\textbf{Output:} $\{(\lambda_{i},\psi_{i})\}_{i=1}^{m}$ (eigenpairs of $P$), $\{p(x_{i})\}_{i=1}^{n}$ (empirical densities), $\{x_{i}^{*}\}_{i=1}^{M}$ (modes).\\
\end{algorithm}

The second stage of the proposed active learning scheme queries labels from the cluster modes learned in Algorithm \ref{alg:ModeDetection}.  This process is detailed in Algorithm \ref{alg:ActiveSampling}.  An example hyperspectral dataset, together with the sampled active labels, is shown illustrated in Figure \ref{fig:SalinasASampling}.

\begin{algorithm}[!htb]
	\caption{\label{alg:ActiveSampling}Active Sampling Procedure}
	\textbf{Input:} $\{(\lambda_{i},\psi_{i})\}_{i=1}^{m}$ (eigenpairs of $P$), $\{x_{i}^{*}\}_{i=1}^{M}$ (modes), $L\in\mathbb{Z}^{+}$ (budget of active queries), $\mathcal{O}$ (labeling oracle).\\
	Let $\mathcal{L}=\{x_{i}^{*}\}_{i=1}^{L}$.\\
	Consult $\mathcal{O}$ to acquire the labeled data $\{(x_{i},y_{i})\}_{x_{i}\in \mathcal{L}}$.\\
	\textbf{Output:} $\{(x_{i},y_{i})\}_{x_{i}\in \mathcal{L}}$ (labeled training data).\\ 
\end{algorithm}

\begin{figure}[!htb]
\centering
\begin{subfigure}{.475\textwidth}
\includegraphics[width=\textwidth]{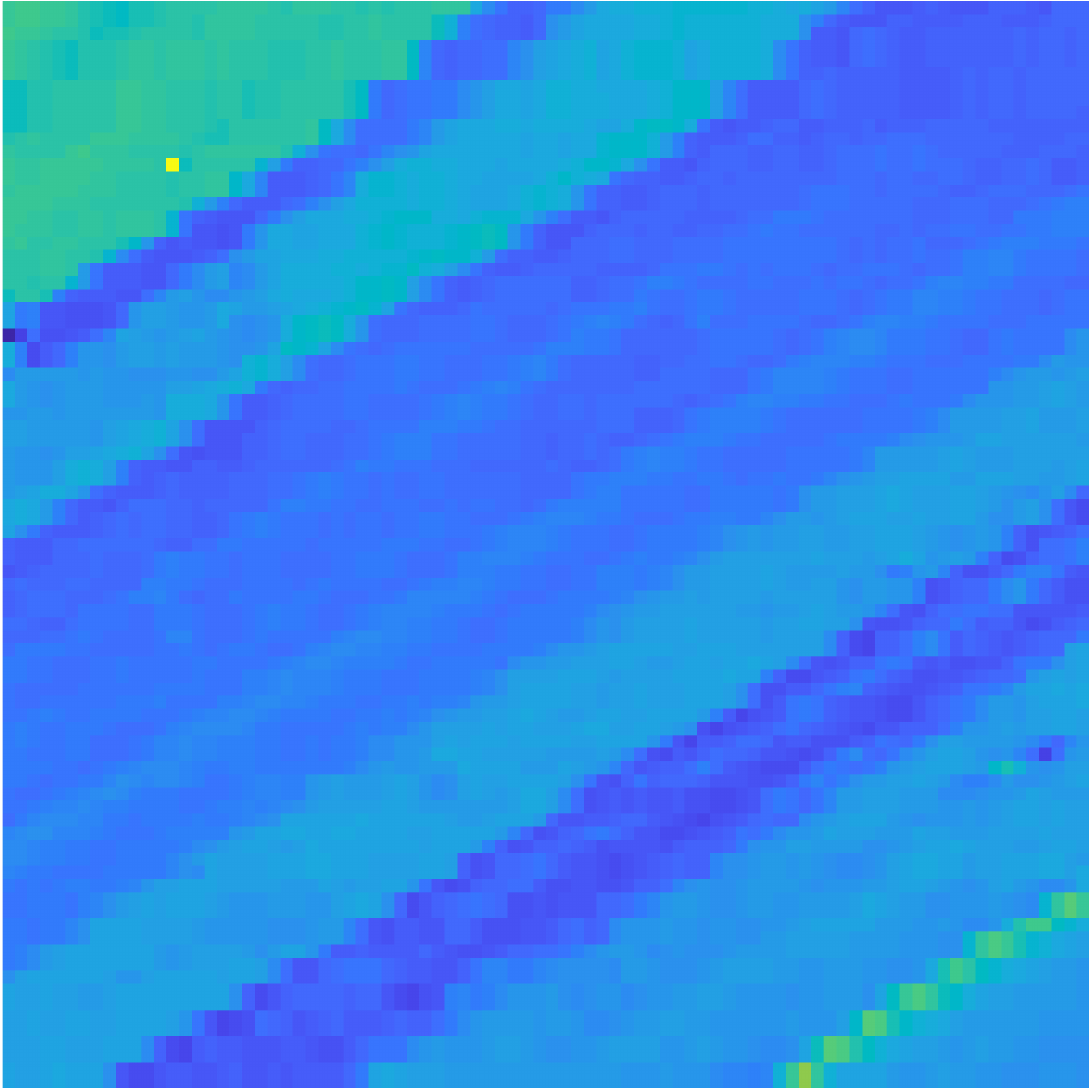}
\subcaption{Salinas A}
\end{subfigure}
\begin{subfigure}{.515\textwidth}
\includegraphics[width=\textwidth]{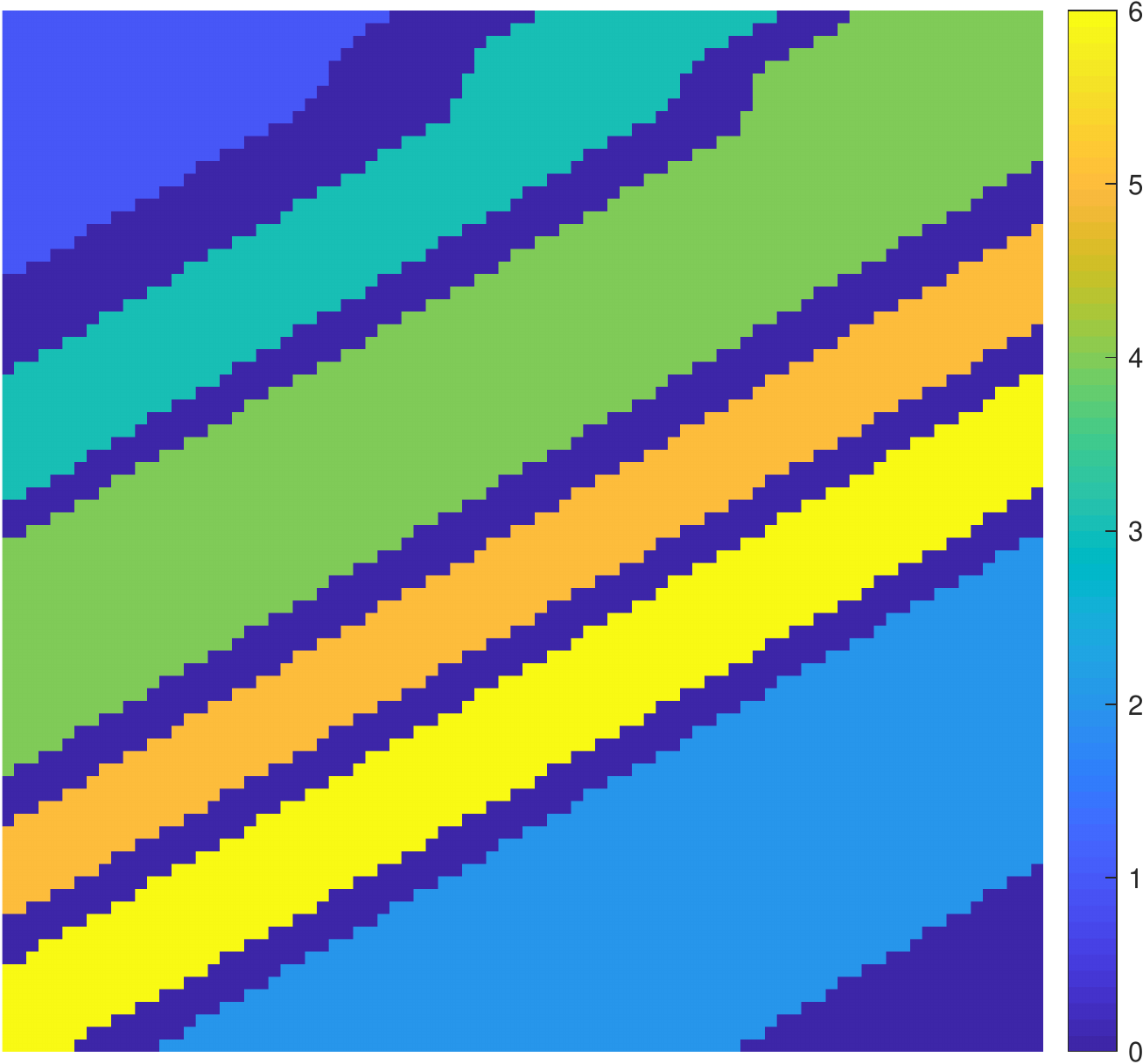}
\subcaption{Salinas A Ground Truth}
\end{subfigure}
\begin{subfigure}{.475\textwidth}
\includegraphics[width=\textwidth]{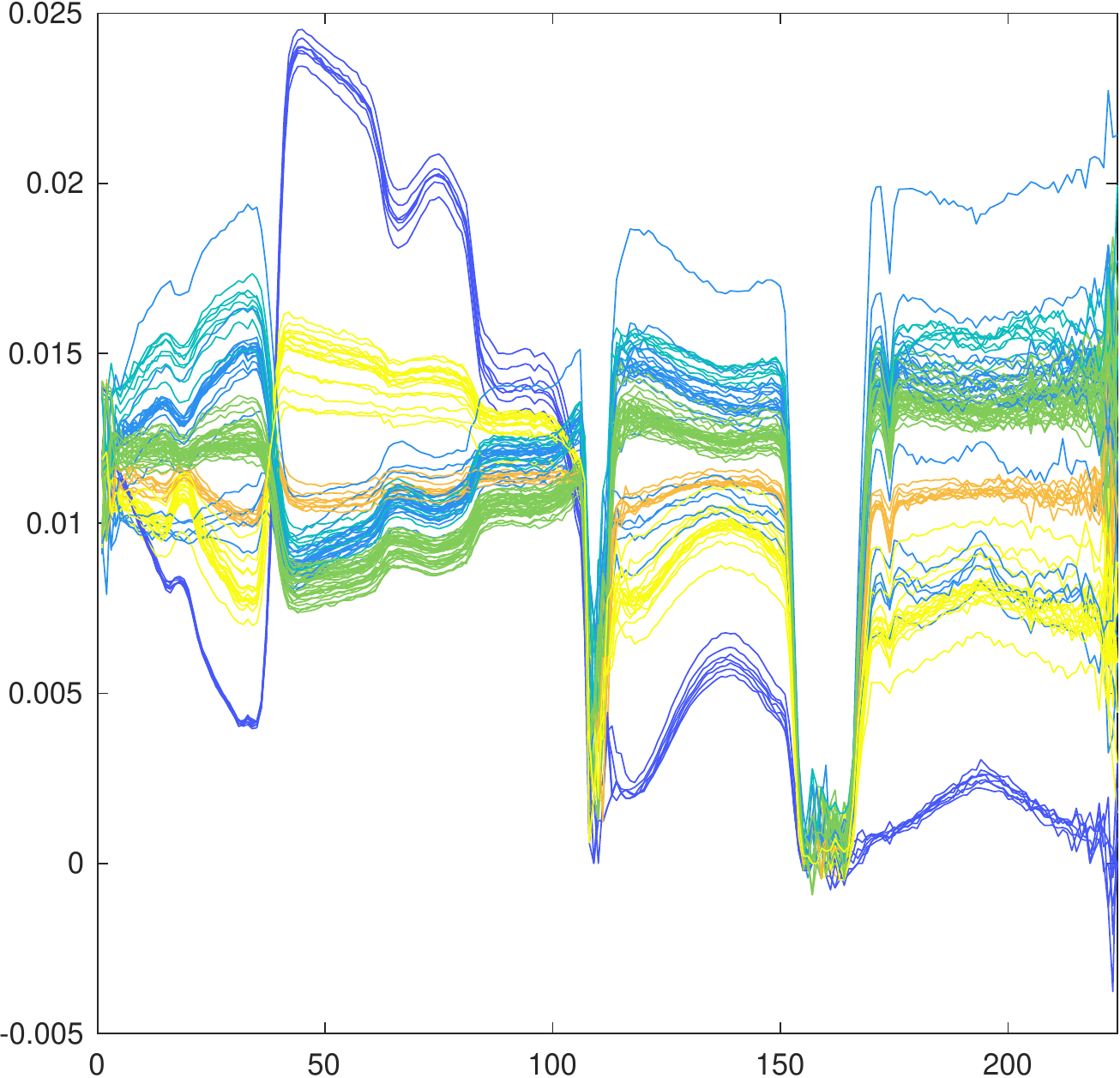}
\subcaption{Salinas A Spectra}
\end{subfigure}
\begin{subfigure}{.515\textwidth}
\includegraphics[width=\textwidth]{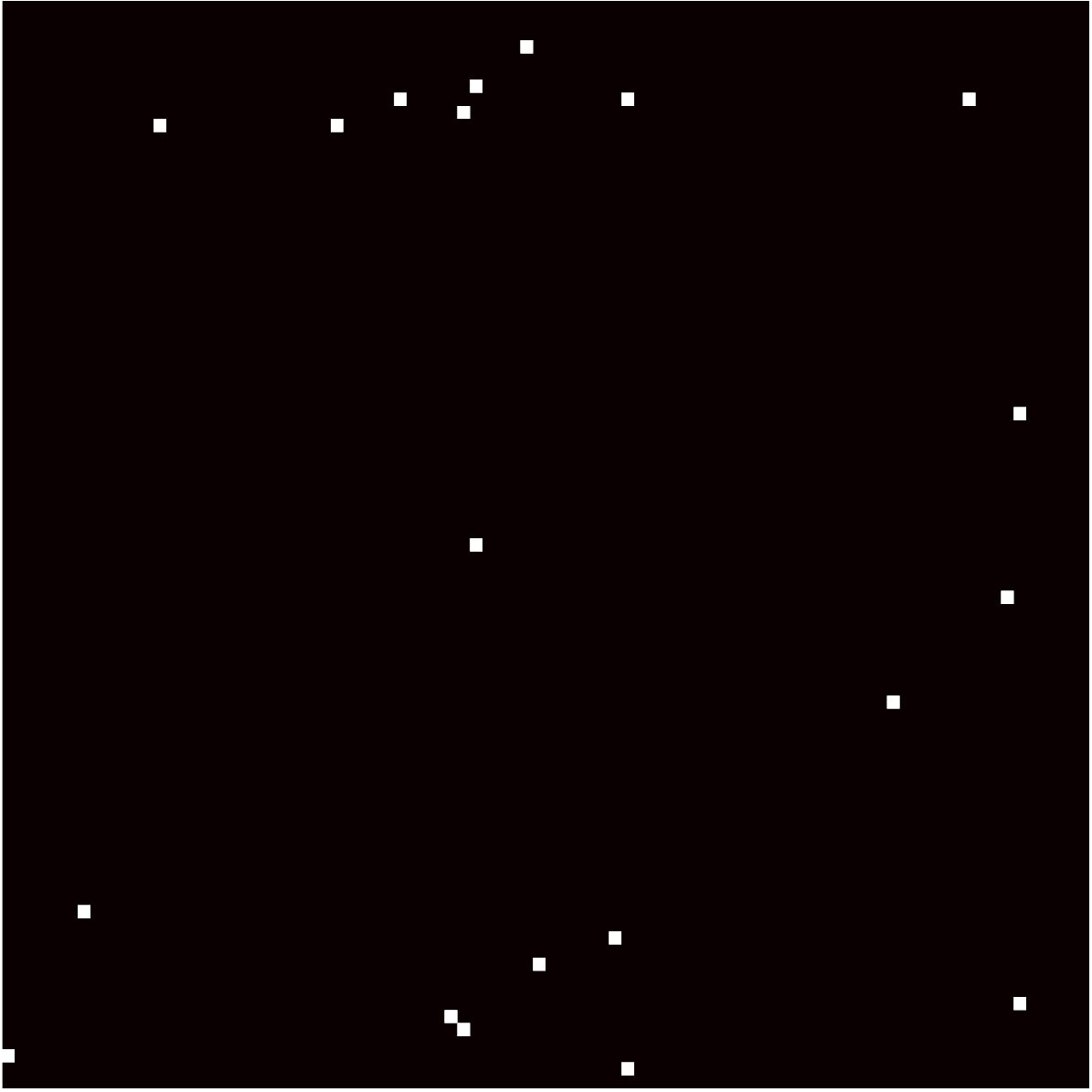}
\subcaption{Salinas A Mode Sampling}
\end{subfigure}
\caption{\label{fig:SalinasASampling}The Salinas A hyperspectral dataset has spatial dimensions $86\times 83$ and consists of 224 spectral bands.  It was collected over Salinas Valley, CA, and has a spatial resolution of 3.7 m/pixel.  In order to differentiate between pixels with identical values, Gaussian noise with mean 0, variance $=10^{-4}$ was added during preprocessing. The sum of all spectral bands is shown in (a).  The full ground truth, arrayed spatially, is shown in (b).   A random sample of 100 spectra for Salinas A are shown in (c) with the same colormap as  (b).  In (d), 20 queries generated from sampling the cluster cores (e.g. the maximizers of $\mathcal{D}_{t}$) are shown.  They are well-spread throughout the image, since they correspond to cluster centers.}
\end{figure}

The third stage of the proposed method is to propagate the queried labels to the rest of the data in a two-step process.  The first step labels all points for which the transition matrix $P$ is confident, while the second step labels the remaining data points in a manner that preserves spatial smoothness in the image.  The notion of confidence is given in terms of the agreement between a point and its nearby spatial neighbors---if the purported label of a datapoint is in agreement with its near neighbors, then the label is confident and is retained.  Otherwise, it is not labeled in the first stage \cite{Murphy2019_Unsupervised}.  The labeling procedure is performed pointwise in order of decreasing density, so that points early in the labeling sequence (i.e. those with large $p$ value) are confidently labeled by default.  Indeed, early in the labeling sequence there are not many spatially nearby points with labels that can contradict the purported label.  The second step straightforwardly labels the remaining ambiguous points so as to preserve spatial regularity.  This third and final stage is detailed in Algorithm \ref{alg:LabelingProcedure}.  

\begin{algorithm}[!htb]
	\caption{\label{alg:LabelingProcedure}Two-Stage Labeling Procedure}
	\textbf{Input:} $X=\{x_{i}\}_{i=1}^{n}\subset\mathbb{R}^{D}$ (data), $\{p(x_{i})\}_{i=1}^{n}$ (empirical densities), $\{(\lambda_{i},\psi_{i})\}_{i=1}^{m}$ (eigenpairs of $P$), $\{(x_{i},y_{i})\}_{x_{i}\in \mathcal{L}}$ (labeled training data).\\
	In order of decreasing $p$-value among unlabeled points, assign each $x_{i}$ the same label as its $D_{t}$-nearest neighbor of higher $p$-value, unless the spatial consensus label of $x_{i}$ exists and differs, in which case $x_{i}$ is not labeled.\\
	Iterating in order of decreasing density among unlabeled points, assign each $x_{i}$ its consensus spatial label, if it exists, otherwise the same label as its $D_{t}$-nearest neighbor of higher $p$-value.  \\
	\textbf{Output:} $\{(x_{i},\hat{y}_{i})\}_{i=1}^{n}$ data points and estimated labels.

\end{algorithm}

The SR LAND algorithm consists in running Algorithms \ref{alg:ModeDetection}-\ref{alg:LabelingProcedure} in sequence.  Compared to existing diffusion geometric approaches to active learning \cite{Maggioni2019_Learning}, the proposed method differs in several key respects.  First, the underlying diffusion process is spatially regularized, so that the underlying random walk $P$ accounts for the high-dimensional structure of the data as well as its spatial structure.  Second, the labeling process in the proposed method consists of the two-stage process described in Algorithm \ref{alg:LabelingProcedure}, which further ensures smooth spatial labels of the data.  These two aspects of the proposed method are shown to be significant in Section \ref{sec:Experiments}.

\section{Experimental Results and Analysis}\label{sec:Experiments}

To empirically validate the efficacy of the proposed method, active learning experiments are performed on two benchmark real hyperspectral datasets: the Salinas A and Indian Pines datasets\footnote{\url{http://www.ehu.eus/ccwintco/index.php/Hyperspectral_Remote_Sensing_Scenes}}.  These datasets, together with their ground truth labels, are in Figures \ref{fig:SalinasASampling} and \ref{fig:IndianPines} respectively.

\begin{figure}[!htb]
\centering
\begin{subfigure}{.3\textwidth}
\includegraphics[width=\textwidth]{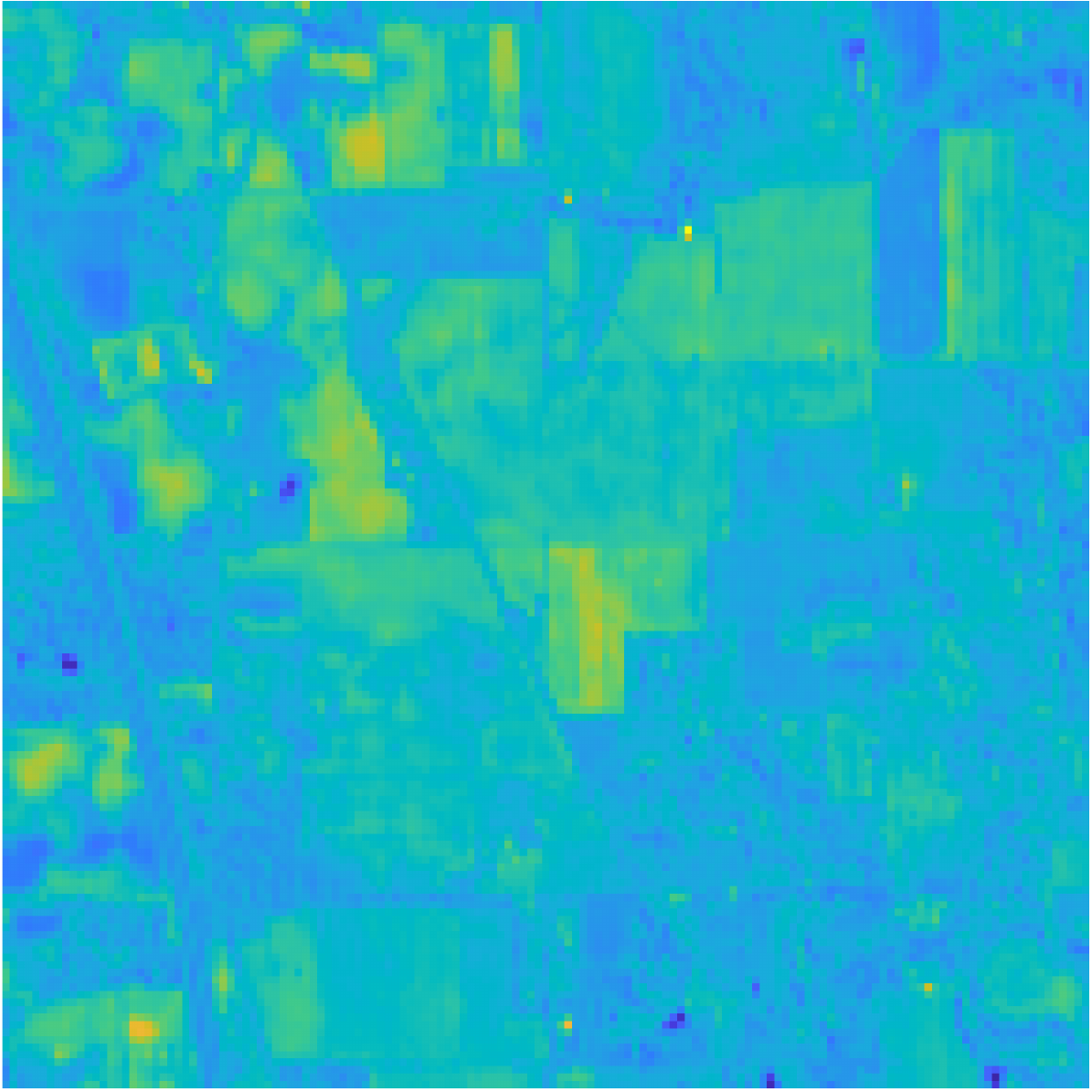}
\subcaption{Indian Pines}
\end{subfigure}
\begin{subfigure}{.33\textwidth}
\includegraphics[width=\textwidth]{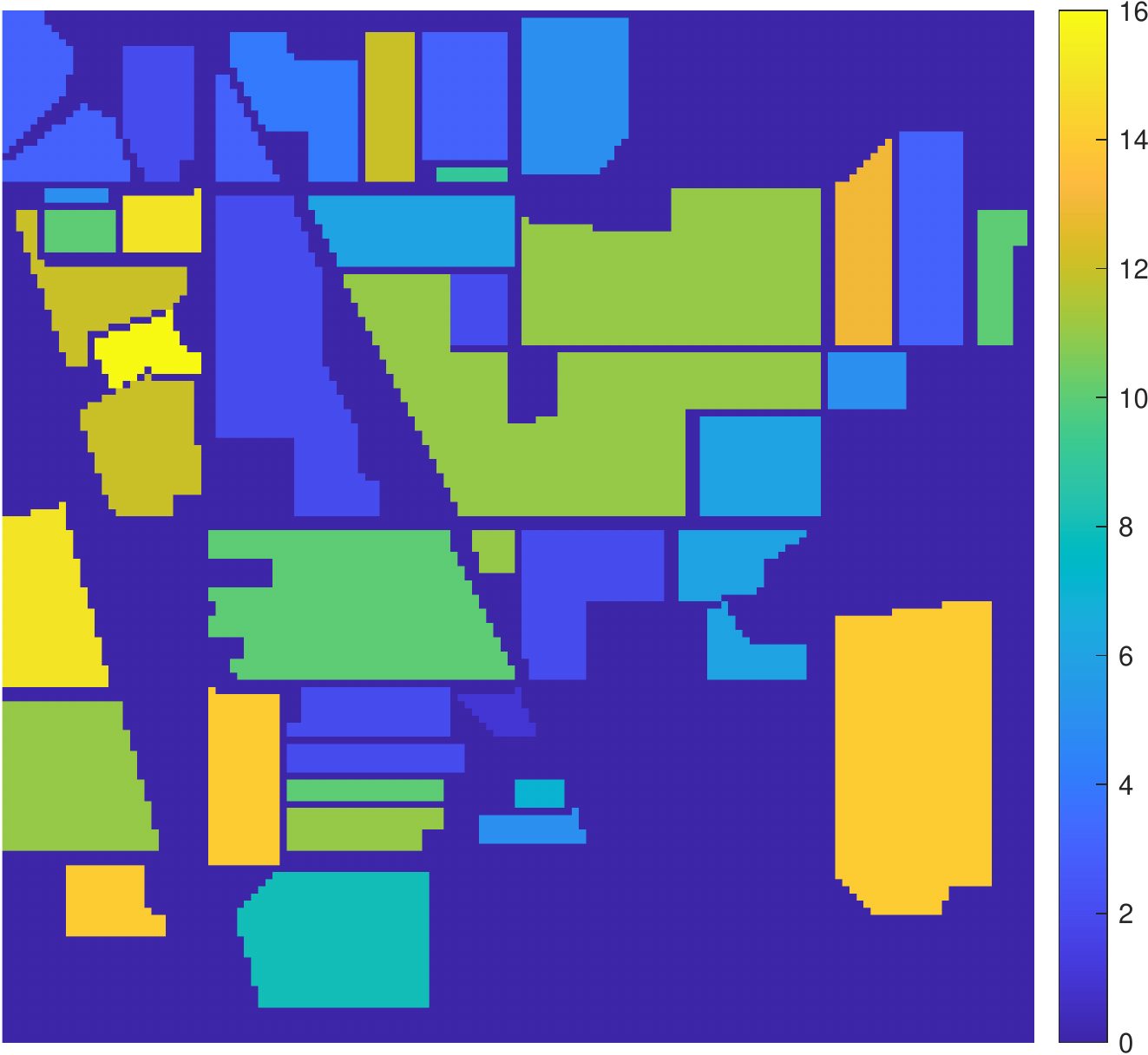}
\subcaption{Indian Pines Ground Truth}
\end{subfigure}
\begin{subfigure}{.32\textwidth}
\includegraphics[width=\textwidth]{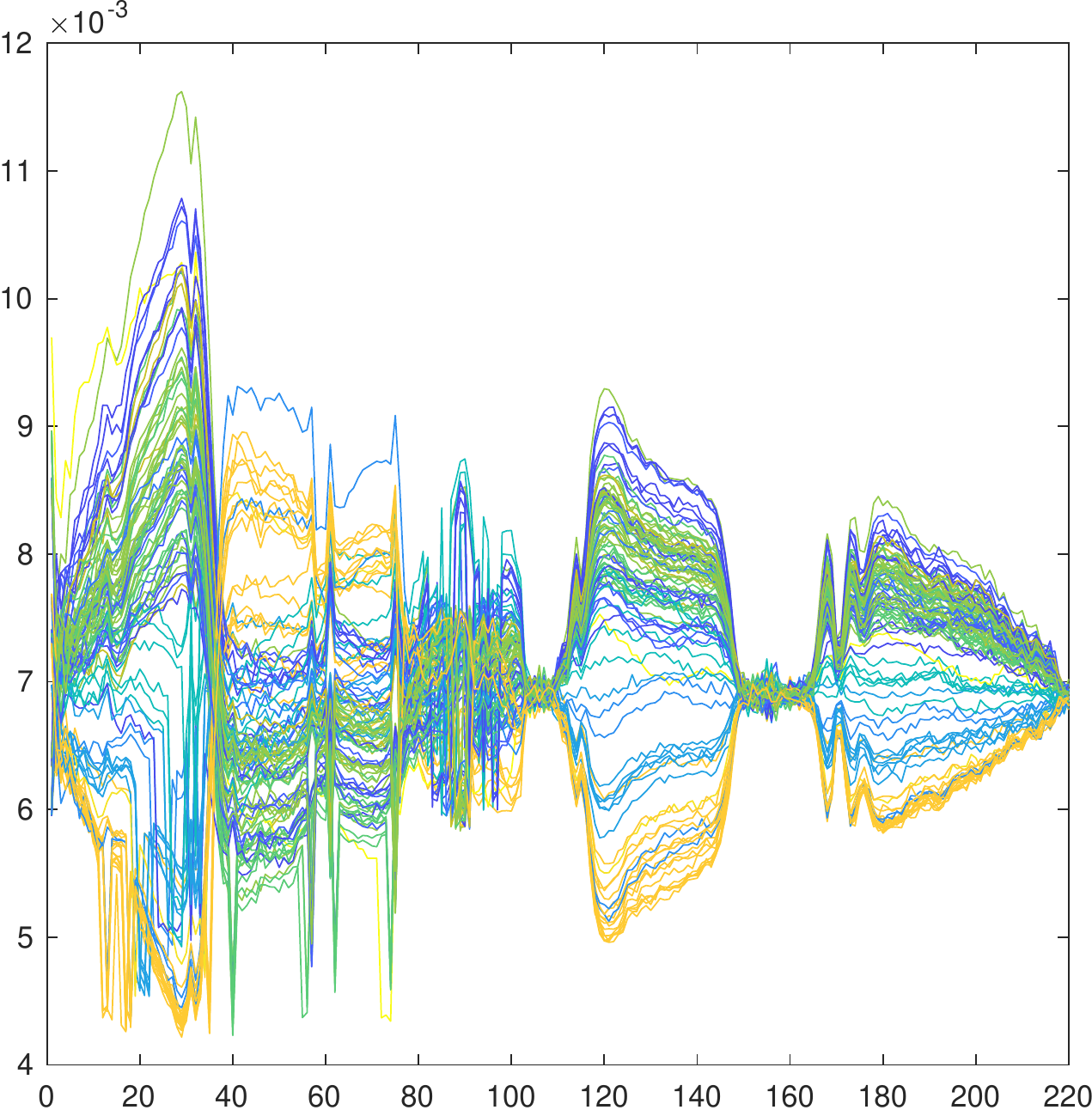}
\subcaption{Indian Pines Spectra}
\end{subfigure}
\caption{\label{fig:IndianPines}The Indian Pines data consists of $145\times 145$ spatial pixels, each of which is a 200 dimensional spectrum. The data was collected in IN, USA, and has a spatial resolution of 20m/pixel.  The sum of all spectral bands is shown in (a).  The full ground truth, arrayed spatially, is shown in (b).  A random sample of 100 spectra are shown in (c), with the same colormap as in (b).}
\end{figure}

The proposed method is compared against 12 related methods.  We benchmark against an SVM with random training samples (Random SVM), as well as an SVM with labels determined by cluster-based active learning \cite{Dasgupta2008_Hierarchical} (CBAL SVM).  Five state-of-the-art active learning algorithms for high-dimensional images are considered: SVM with samples generated by margin sampling (MS SVM) \cite{Tuia2009_Active}; SVM with samples generated by multiview sampling (MV SVM) \cite{Di2011_View}; SVM with samples generated by entropy query-by-bagging (EQB SVM) \cite{Tuia2009_Active}; SVM with image fusion and recursive filtering  (IFRF SVM) \cite{Kang2013_Feature}; and loopy back propagation (LBP) \cite{Li2012_Spectral}.  

We also compare against several variants of the proposed method, to illustrate the importance of its key innovations.  In particular, we consider variants in which random labels are used in the LAND algorithm (SR Random LAND) and in which only the boundary is sampled (SR Boundary LAND).  The boundary queries are determined by which points are most ambiguous with respect to their nearest modes.  More precisely, to estimate the cluster boundaries, we calculate for each  $x_{i}$ the distance between its two $D_{t}$-nearest modes $x_{i,1}^{*}, x_{i,2}^{*}$.  If this distance is small, it suggests $x_{i}$ is in between two clusters, and the label of $x_{i}$ is ambiguous and should be queried.  The associated function whose minimizers are near the estimated boundaries is: \begin{equation}\label{eqn:BoundaryCondition}\mathcal{B}(x_{i})=|D_{t}(x_{1,i}^{*},x_{i})-D_{t}(x_{2,i}^{*},x_{i})|, \ x_{1,i}^{*}, x_{2,i}^{*} \text{ the } D_{t}\text{-nearest modes to } x_{i}.\end{equation}  This approach is related to a previously developed method for active sampling of hyperspectral images \cite{Murphy2019_Unsupervised, Murphy2018_Iterative}, and may be understood as a margin-driven active learning method.  Indeed, the points which minimize $\mathcal{B}$ are in some sense at the boundaries of the cluster structure learned from the diffusion geometry of the data; sampling these points has the impact of setting the boundary between the different clusters.  In order to distinguish between these methods, the proposed method (as described in Section \ref{sec:Algorithm}) is called SR Core LAND in this section, since it samples actively from the estimated cluster cores.  In order to investigate the significance of the spatial regularization of the underlying diffusion process, we also consider all LAND variants, but without spatial regularization (denoted Random LAND, Boundary LAND, and Core LAND, respectively).  

To evaluate accuracy, we consider the proportion of correctly labeled points, denoted the overall accuracy.  Alternative notions of accuracy (e.g. classwise average accuracy and Cohen's $\kappa$ statistic) were also computed, but are not shown for reasons of space.  To account for randomness in the methods that are random, the results of 10 independent trials were averaged.

Results as a function of the number of active queries are shown in Figure \ref{fig:ResultsPlots}.  SR Core LAND used a spatial radius $R=11$ for Salinas A and $R=14$ for Indian Pines.  Results for specific numbers of active samples---corresponding to a small and large number of active samples---appear in Table \ref{tab:Results}, with corresponding images of the labeling results appearing in Figure \ref{fig:LabelPlots}.  We see that SR Core LAND quickly improves as a function of the number of training samples.  In particular, on Indian Pines, it strongly outperforms all comparison methods, achieving more than $86\%$ labeling accuracy with only 100 active queries, which is less than .5\% of the total points in the dataset.  We note that SR Core LAND does not guarantee a balanced sample from each class; we merely require that, like all the comparison methods, each class has at least one label in the training set, which makes its performance all the more impressive.  On Salinas A, SR Core LAND achieves near-perfect accuracy with fewer than 20 labels, less than .3\% of the total number of points in the dataset.  On this simpler dataset, several comparison methods also perform well, though SR Core LAND is the optimal performer in this small training set regime.  
 
\begin{figure}[!htb]
\centering
\begin{subfigure}{.49\textwidth}
\includegraphics[width=\textwidth]{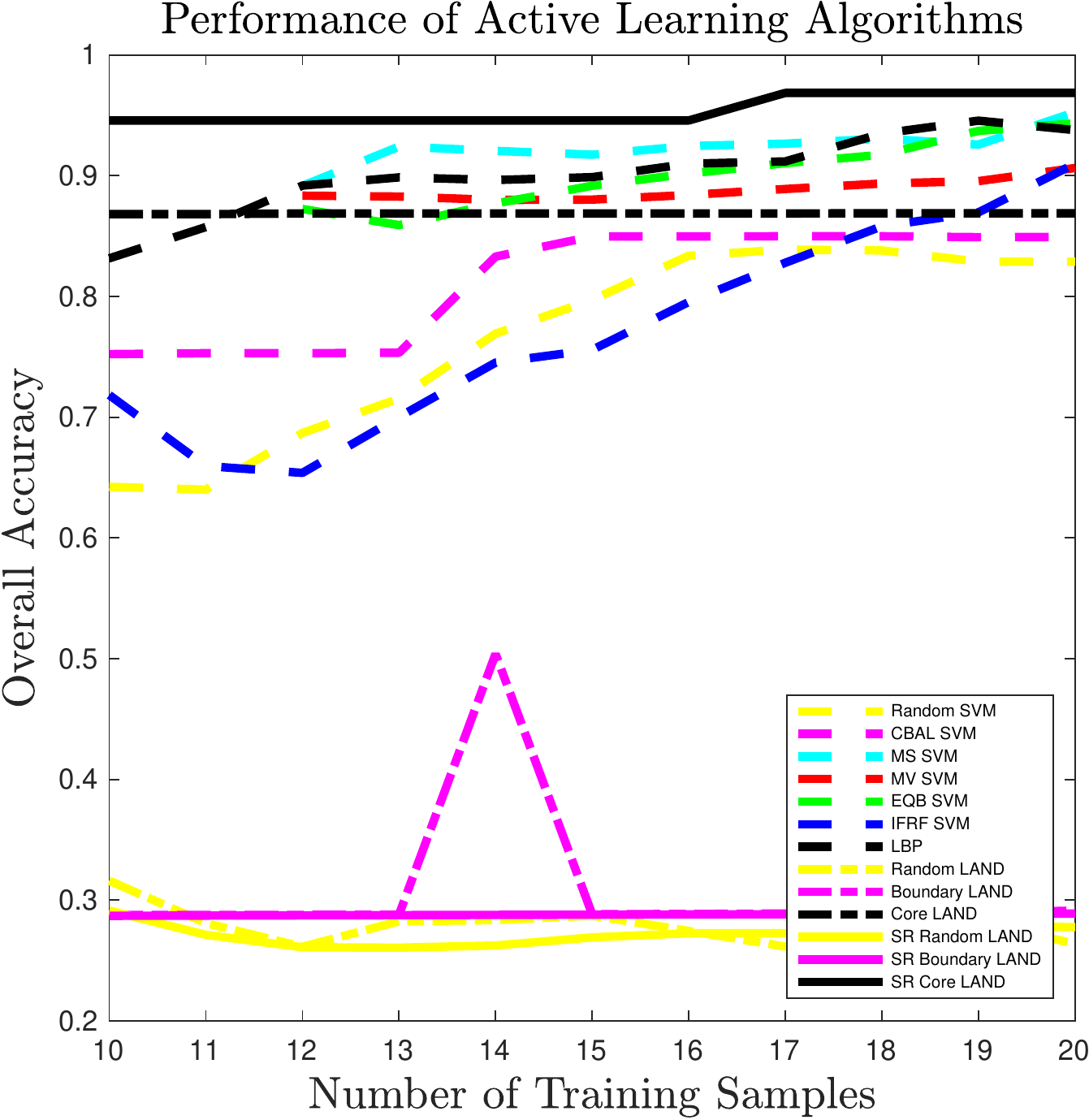}
\subcaption{Results for Salinas A}
\end{subfigure}
\begin{subfigure}{.49\textwidth}
\includegraphics[width=\textwidth]{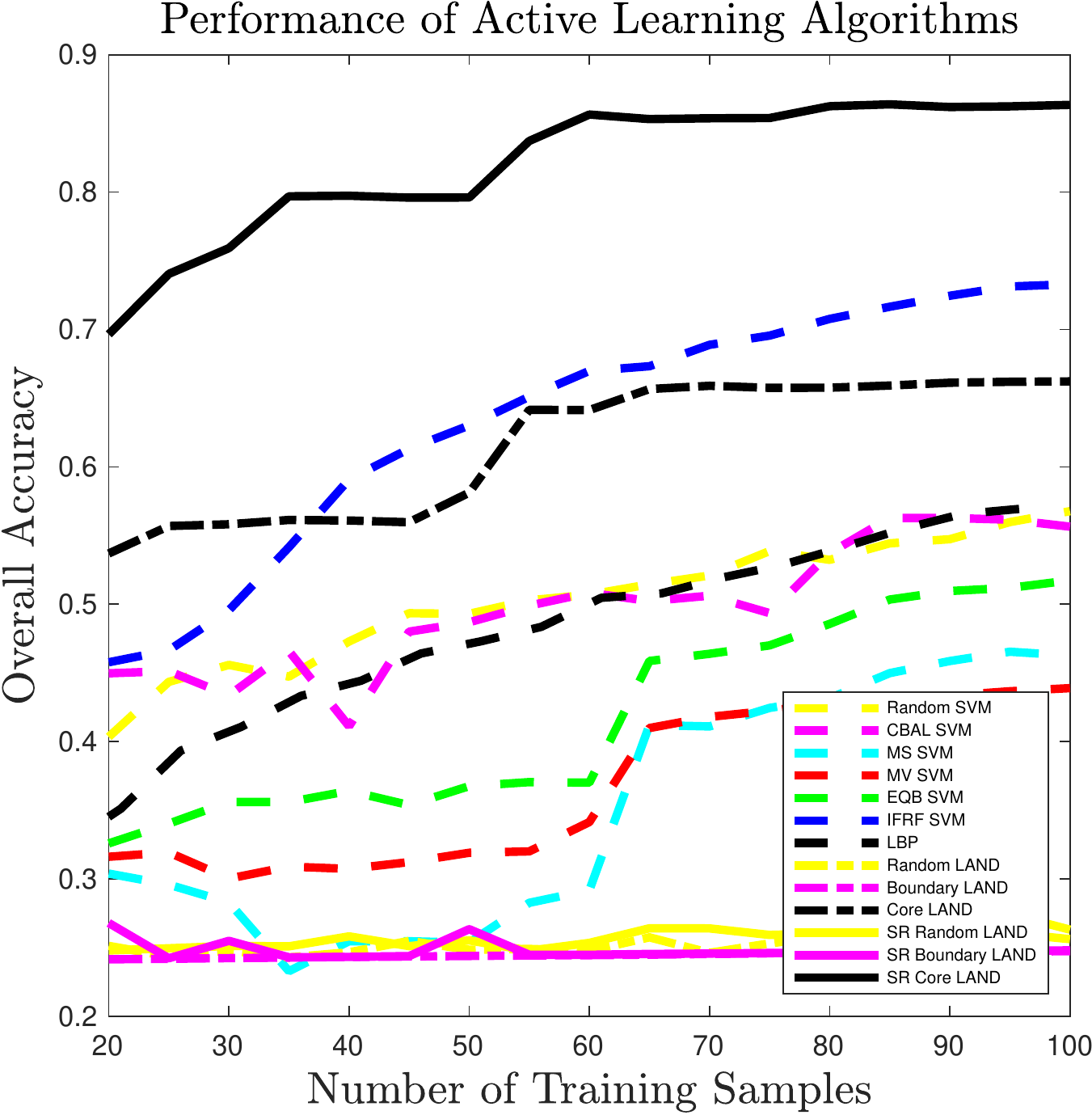}
\subcaption{Results for Indian Pines}
\end{subfigure}
\caption{\label{fig:ResultsPlots}Results as a function of the number of actively queried samples.  We see that the proposed SR Core LAND method performs best, and in particular rapidly achieves very strong performance with respect to a small number of training labels.  The second best performer changes depending on the dataset and number of query points, but in general, the non-baseline SVM methods and Core  LAND tend to perform well.  The Boundary and Random LAND algorithms perform very poorly, demonstrating the need to incorporate at least some cluster core information into the LAND approach.  Note that some methods are not monotonic increasing in the number of queries (e.g. CBAL SVM and MS SVM).}
\end{figure}

\begin{figure}[!htb]
\centering
\begin{subfigure}{.49\textwidth}
\includegraphics[width=\textwidth]{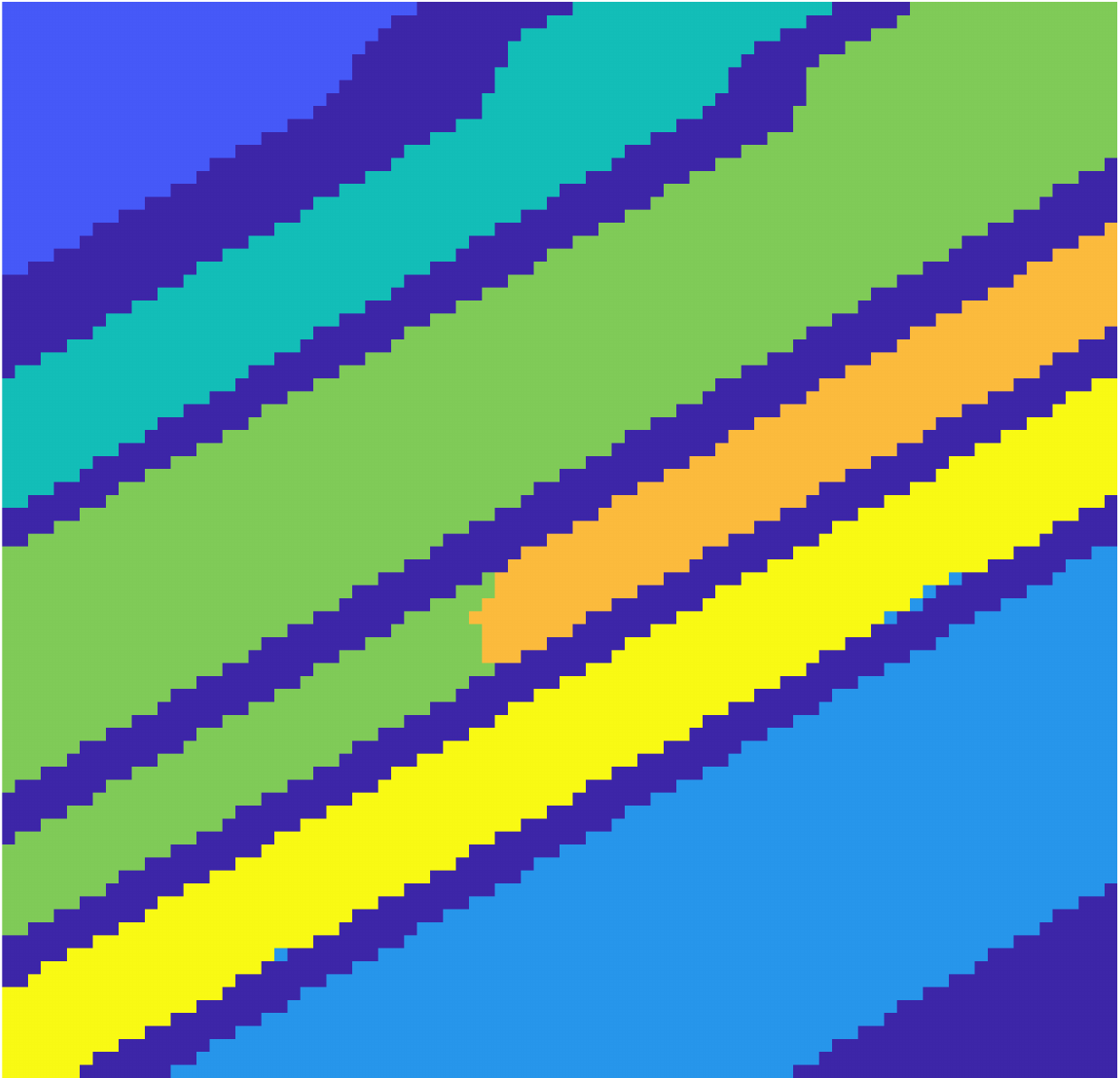}
\subcaption{Salinas A, 10 active queries}
\end{subfigure}
\begin{subfigure}{.49\textwidth}
\includegraphics[width=\textwidth]{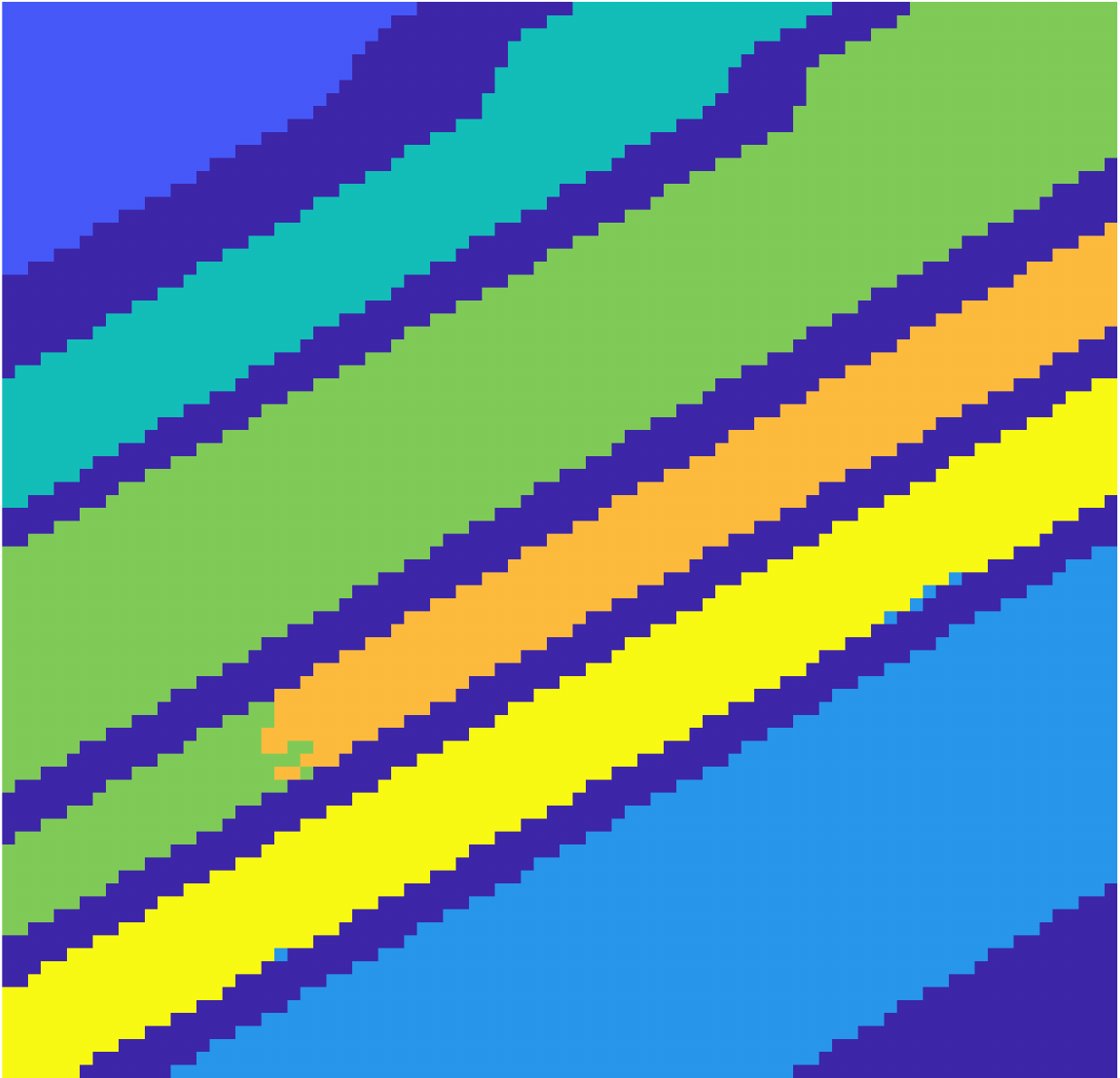}
\subcaption{Salinas A, 20 active queries}
\end{subfigure}
\begin{subfigure}{.49\textwidth}
\includegraphics[width=\textwidth]{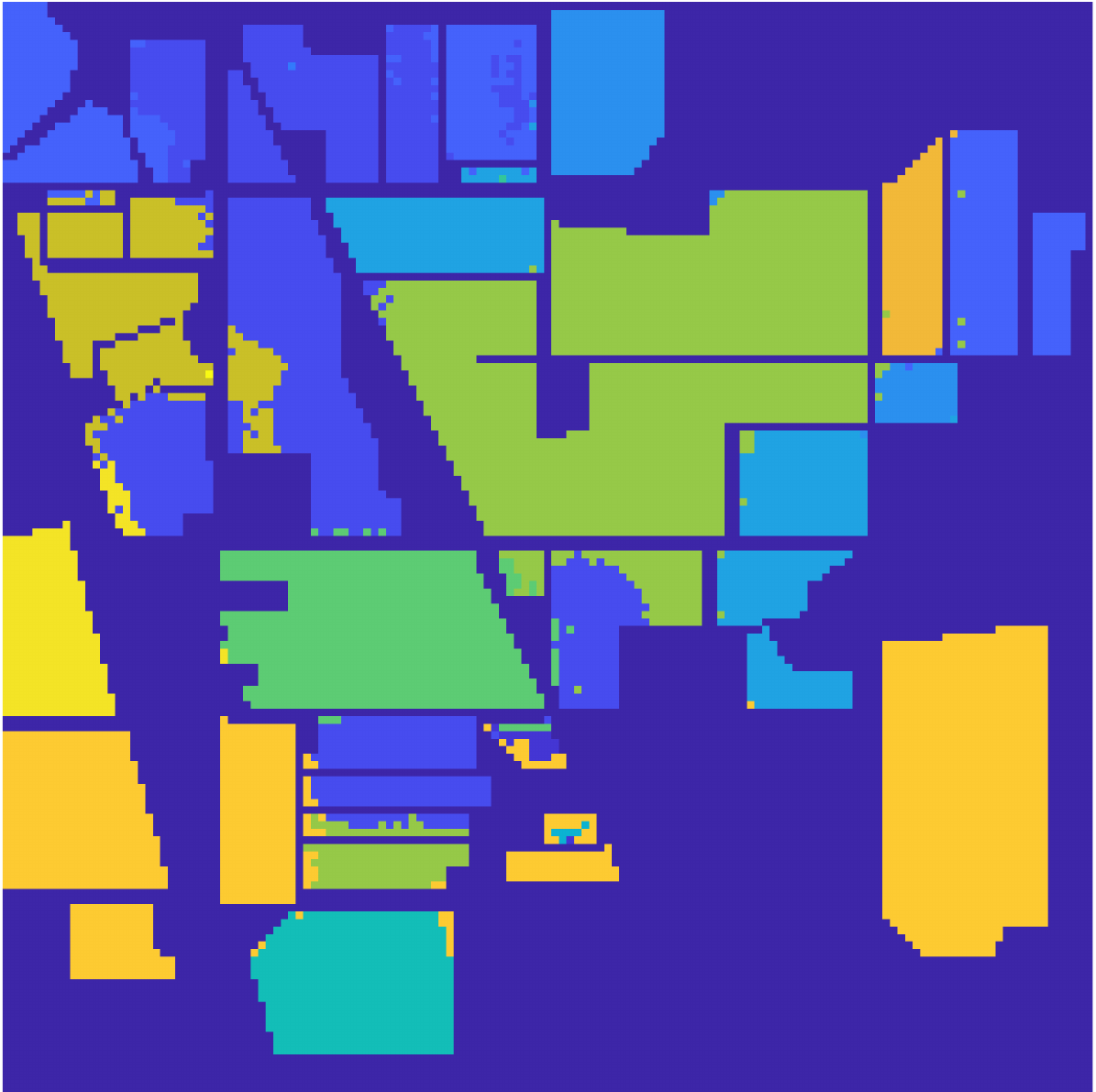}
\subcaption{Indian Pines, 50 active queries}
\end{subfigure}
\begin{subfigure}{.49\textwidth}
\includegraphics[width=\textwidth]{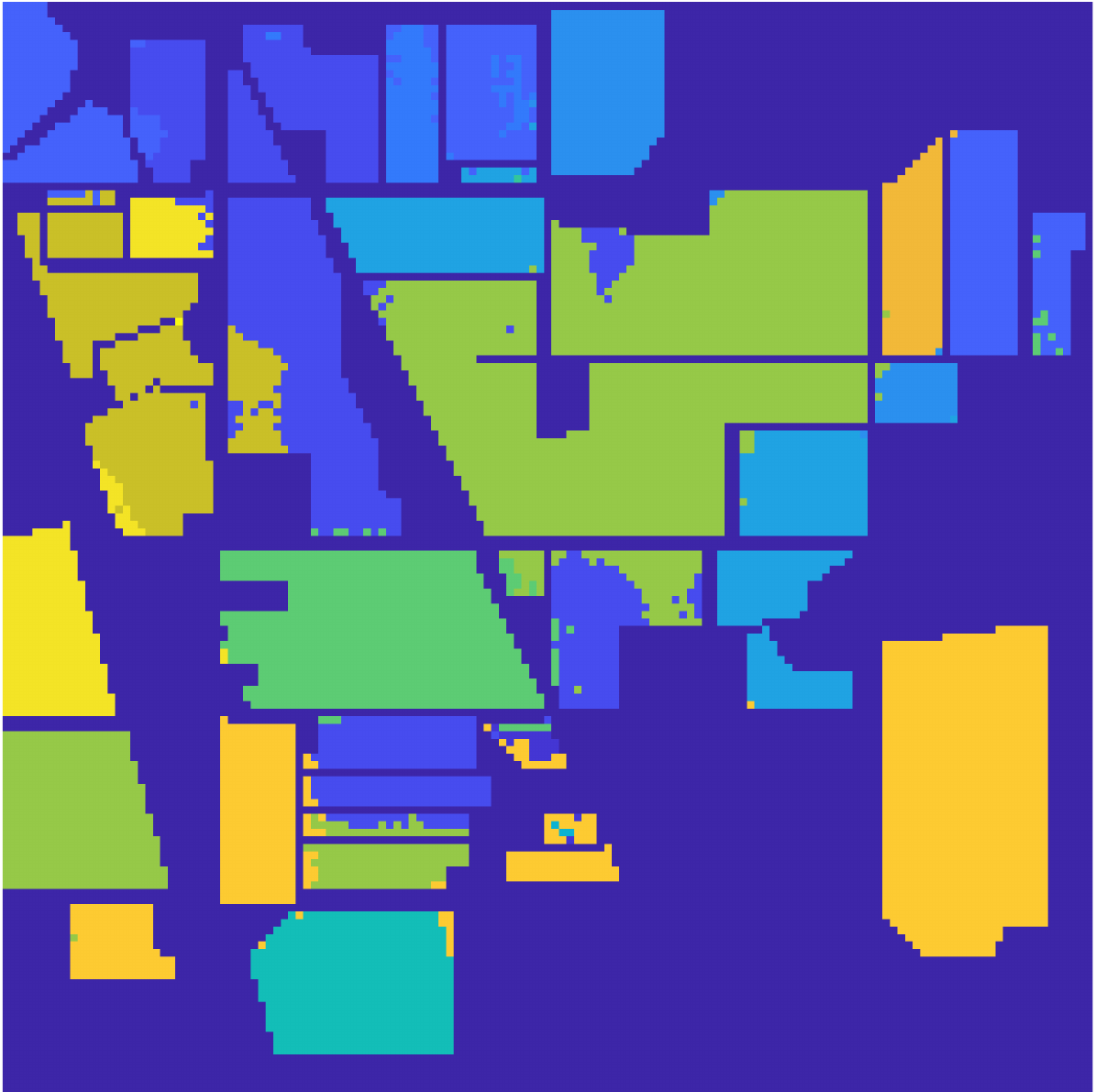}
\subcaption{Indian Pines, 100 active queries}
\end{subfigure}
\caption{\label{fig:LabelPlots}Results of the SR LAND scheme are shown for Salinas A in (a), (b) and for Indian Pines in (c), (d).  Comparing to the respective ground truths (see Figure \ref{fig:SalinasASampling} (b) and Figure \ref{fig:IndianPines} (b), respectively), the impact of adding additional queries is to improve results not incrementally, but dramatically by correctly labeling entire classes.  Indeed, the characterization of classes in terms of their modes suggests that all points that are close in diffusion distance from their nearest mode should belong to the same class.}
\end{figure}

\begin{table}[htb!]
\centering
\begin{adjustbox}{max width=\textwidth}
\begin{tabular}{| c | c | c | c | c | c | c  | c | c | c | c | c | c | c | }\hline
Method & Random SVM & CBAL SVM & MS SVM & MV SVM & EQB SVM & IFRF SVM & LBP & Random LAND & Boundary LAND & Core LAND & SR Random LAND & SR Boundary LAND & SR Core LAND \\ \hline
Salinas A (10 Labels) & 0.6988  &  0.7835  &  0.8672  &  0.8672  &  \underline{0.8943}  &  0.8240  &  0.6907  &  0.2865  &  0.2883  &  0.8682  &  0.2866  &  0.2870  &  \textbf{0.9458} \\ \hline
Salinas A (20 Labels) & 0.8764  &  0.8502  &  0.9618  &  0.9419  &  \underline{0.9677}  &  0.9295  &  0.9192  &  0.2876  &  0.2893  &  0.8689  &  0.2898  &  0.2889  &  \textbf{0.9686} \\ \hline
Indian Pines (50 Labels)   &  0.5138  &  0.4744  &  0.3273  &  0.3936  &  0.4274  &  \underline{0.6520}  & 0.4898   &  0.2445  &  0.2445  &  0.6429  &  0.2436  &  0.2448  &  \textbf{0.8389} \\ \hline
Indian Pines (100 Labels)  &  0.5583  &  0.4868  &  0.3908  &  0.4415  &  0.4813  &  \underline{0.7412}  & 0.5722  &  0.2763  &  0.2474  &  0.6622  &  0.2939  &  0.2480  &  \textbf{0.8648}\\ \hline
\end{tabular}
\end{adjustbox}
\caption{\label{tab:Results}Summary of performance of different methods in terms of overall accuracy.  We see that for all experiments, the SR Core LAND algorithm performs optimally among all comparison methods.  We notice also that with only 10 labels, SR Core LAND achieves nearly $95\%$ accuracy on Salinas A.  Indeed, the unsupervised variant LUND \cite{Maggioni2019_Learning_JMLR, Murphy2019_Unsupervised} achieves roughly 85\% accuracy with no labels, suggesting that the diffusion core structure learned is highly salient.  Similarly, with only 100 labels in a dataset with 16 classes, SR Core LAND achieves over 86$\%$ accuracy on Indian Pines, far outpacing any competitor methods.}
\end{table}

Figure \ref{fig:SpatialParameterPlots} shows the impact of the spatial diffusion radius $R$ on SR LAND.  Like a regularization parameter, the optimal choice is not too big (insufficient spatial regularization) and not too small (too much spatial regularization).

\begin{figure}[!htb]
\centering
\begin{subfigure}{.49\textwidth}
\includegraphics[width=\textwidth]{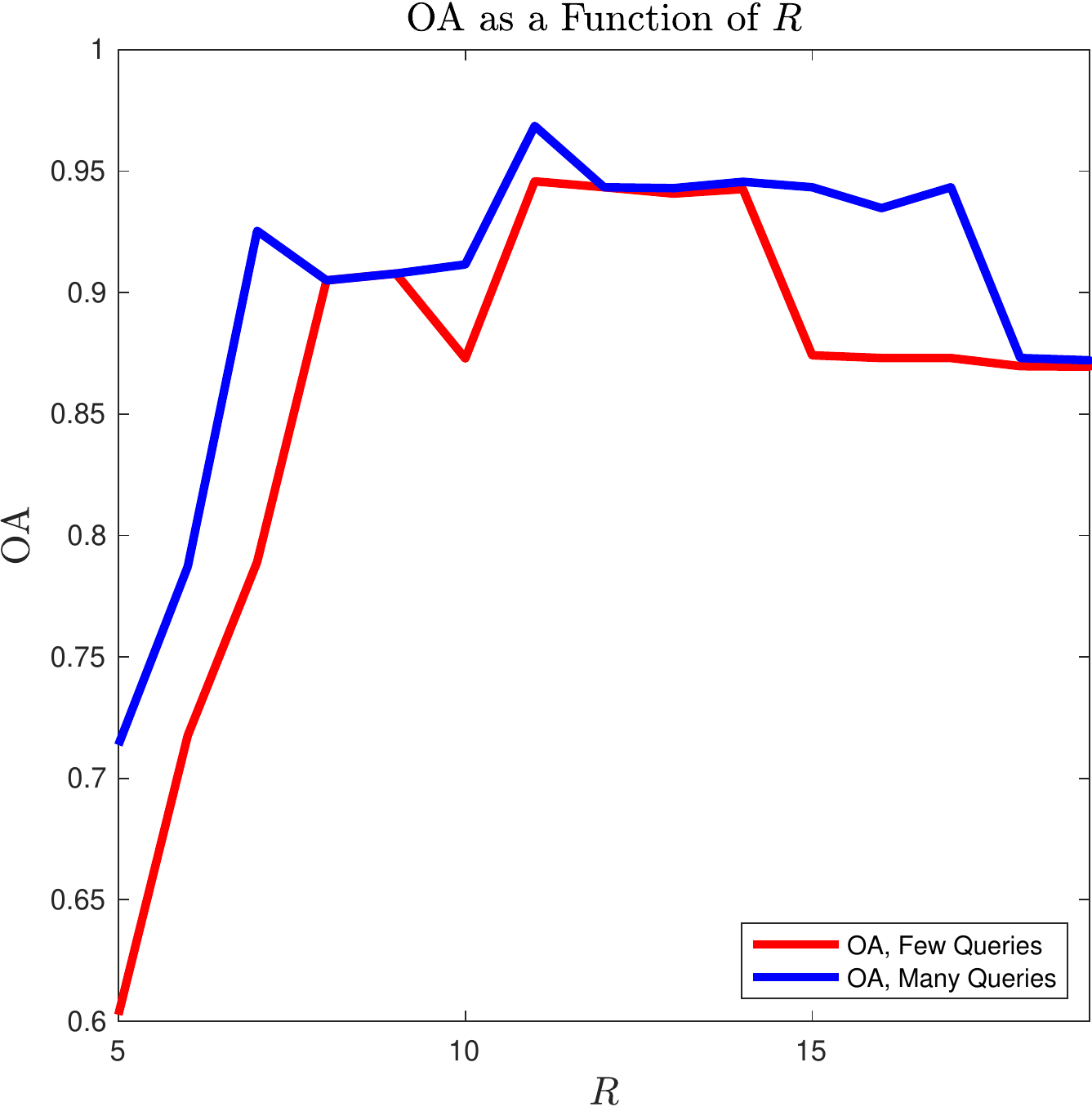}
\subcaption{Salinas A}
\end{subfigure}
\begin{subfigure}{.49\textwidth}
\includegraphics[width=\textwidth]{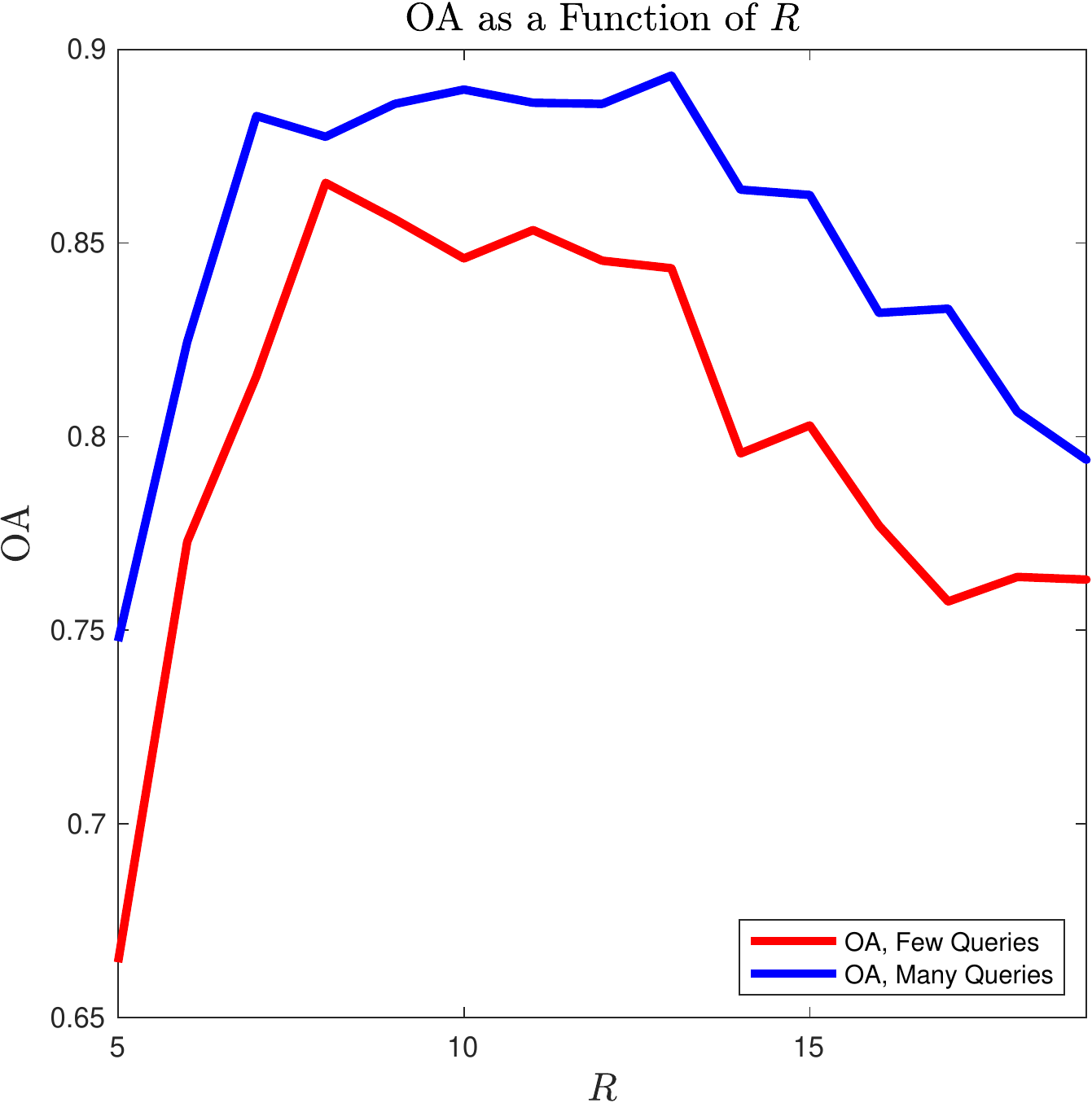}
\subcaption{Indian Pines}
\end{subfigure}
\caption{\label{fig:SpatialParameterPlots}Impact of spatial radius $R$ of the diffusion process.  We see for both datasets, there is a general trend for results to improve as the radius increases from a small value, before reaching a maximum then decreasing.  We remark that the experiments were run for both a smaller and larger number of active queries.  For Salinas A, 10 and 20 samples are used.  For Indian Pines, 50 and 100 samples are used.}
\end{figure}

\subsection{Complexity Analysis and Runtime}\label{subsec:ComplexityRuntime}

The empirical runtimes of all methods are in Table \ref{tab:RunTime}.  The proposed algorithm enjoys low computational complexity with respect to both the ambient data dimensionality $D$ and the number of points $n$, under a mild regularity assumption on the data in terms of the number of high density points:

\begin{defn}Data $\{x_{i}\}_{i=1}^{n}\subset\mathbb{R}^{D}$ with associated empirical densities $\{f_{\sigma}(x_{i})\}_{i=1}^{n}$ and diffusion distance function $D_{t}:X\times X\rightarrow [0,1)$ satisfies the \emph{few density peaks hypothesis (FDP)} if all except for $O(\log(n))$ data points have a point of higher empirical density among their $O(\log(n))$ $D_{t}$-nearest neighbors.  

\end{defn}

Note that if the data is generated from a mixture of distributions with finitely many local density maxima (e.g. a mixture of Gaussians), then the FDP hypothesis holds in the asymptotic limit $n\rightarrow\infty$.  This suggests it holds in the finite $n$ case with high probability for $n$ sufficiently large \cite{Abraham2004_Asymptotic, Jiang2017_Uniform}.  The FDP hypothesis guarantees that, for data with intrinsically low dimension in the sense of doubling dimension \cite{Beygelzimer2006_Cover}, SR Core LAND is fast:

\begin{thm}Suppose data $\{x_{i}\}_{i=1}^{n}\subset\mathbb{R}^{D}$ satisfies the FDP hypothesis and has intrinsic dimensionality $d$ in the sense of doubling dimension.  If $r,m=O(1), k=O(\log(n))$, then the proposed SR Core LAND algorithm has complexity that is quasilinear in $n$ and $D$ and exponential in $d$.

\end{thm}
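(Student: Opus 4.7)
The plan is to bound the complexity of each of the three phases of SR Core LAND separately and show that each contributes at most a quasilinear-in-$n$, quasilinear-in-$D$, and $2^{O(d)}$ factor. Concretely, I would account for: (i) assembling the spatially regularized graph $W$ and its normalization $P$; (ii) computing the $m$ principal eigenpairs of $P$, the KDE $\{p(x_i)\}$, and the diffusion modality scores $\{\mathcal{D}_t(x_i)\}$; and (iii) the two-stage label propagation in Algorithm \ref{alg:LabelingProcedure}. The assumption that $r$ and $m$ are $O(1)$ and $k=O(\log n)$ will be used throughout to control graph sparsity, eigensolver cost, and KDE cost, while the doubling-dimension hypothesis will be used to control nearest-neighbor searches in $\mathbb{R}^D$, and the FDP hypothesis to control the $D_t$-nearest-higher-density searches used to form $\tilde{\rho}_t$ and to propagate labels.

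First I would observe that because $r=O(1)$ and the image grid has bounded spatial degree, each point has only $O(1)$ spatial neighbors, so $W$ in (\ref{eqn:W}) has $O(n)$ nonzero entries, each costing $O(D)$ to evaluate; thus building $W$ and $P=D^{-1}W$ takes $O(nD)$ time. For the $m=O(1)$ principal eigenpairs of the sparse, symmetrizable matrix $P$, I would invoke a Krylov-type iterative eigensolver (e.g.\ implicitly restarted Arnoldi/Lanczos), whose dominant cost is $O(1)$ matrix-vector products each of cost $O(n)$, yielding $O(n)$ total up to polylogarithmic factors. For the KDE, each of the $n$ points needs its $k=O(\log n)$ Euclidean nearest neighbors in $\mathbb{R}^D$; using a cover tree built on data with doubling dimension $d$, this costs $O(D\, 2^{O(d)} \log n)$ per query, summing to $O(nD\, 2^{O(d)}\log n)$.

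Next I would bound the cost of $\rho_t$ and label propagation by using the truncated spectral representation (\ref{eqn:DiffusionDistancesSpectralFormulationTruncated}): after the eigenpairs are in hand, each point is represented by a vector in $\mathbb{R}^m$ with $m=O(1)$, and $D_t(x_i,x_j)$ becomes an $O(1)$-cost Euclidean distance in this embedding. The FDP hypothesis guarantees that for all but $O(\log n)$ points, the $D_t$-nearest higher-density neighbor lies among the $O(\log n)$ $D_t$-nearest neighbors, which can be retrieved in $O(\log^2 n)$ time per point using a cover tree (or even a $k$-d tree) on the $O(1)$-dimensional diffusion embedding. The residual $O(\log n)$ exceptional points can be handled by a global $O(n)$ scan each, contributing $O(n\log n)$ overall. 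The spatial-consensus checks in Algorithm \ref{alg:LabelingProcedure} cost $O(1)$ per point since the spatial radius is $O(1)$, and sorting by density for the passes in Algorithm \ref{alg:LabelingProcedure} is $O(n\log n)$. Summing the contributions then gives a total of $O\!\bigl(nD\, 2^{O(d)}\, \text{polylog}(n)\bigr)$, which is precisely quasilinear in $n$ and $D$ and exponential in $d$.

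The main obstacle I expect is the nearest-neighbor bookkeeping: one must verify that the cost model for cover-tree queries genuinely delivers the claimed $2^{O(d)}\log n$ per query in the two different metric settings we use, namely the raw $\mathbb{R}^D$ ambient metric (for the KDE) and the $m$-dimensional diffusion embedding (for $\rho_t$ and label propagation). For the ambient step the doubling-dimension assumption applies directly; for the diffusion step, one has to argue that the embedding inherits a bounded doubling dimension, which follows essentially for free from $m=O(1)$ but should be stated carefully. A secondary, more delicate point is justifying that the eigensolver achieves $O(n)$ up to polylogarithmic factors independently of the spectral gap; I would handle this by noting that under the FDP/doubling-dimension regime the gap depends only on the intrinsic geometry and $t$, not on $n$, so the number of Krylov iterations is $O(1)$ asymptotically.
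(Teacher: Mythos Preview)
Your proposal is correct and follows essentially the same phase-by-phase accounting as the paper's own proof: bound the graph construction, the sparse eigensolve for the $m$ principal eigenpairs, the KDE via cover trees (where the $2^{O(d)}$ factor enters), and the $\rho_t$/label-propagation stages via the FDP hypothesis, then sum. Your treatment is in fact more explicit than the paper's---you spell out that the $D_t$-nearest-neighbor searches occur in the $m=O(1)$-dimensional diffusion embedding and that the $O(\log n)$ FDP-exceptional points are handled by a global $O(n)$ scan each---whereas the paper simply asserts $O(n\log(n)D)$ for the mode-detection and labeling stages without further justification.
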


\begin{proof}The construction of the transition matrix $P$ is $O(r^{2}n)$, and since $r=O(1)$, this $O(n)$.  In order to use (\ref{eqn:DiffusionDistancesSpectralFormulationTruncated}), we must compute the $m$ eigenvectors of $P$ with largest (in modulus) eigenvalues.  Since $P$ is sparse (each row of $P$ has at most $r^{2}$ non-zero entries), this can be done in complexity $O(m^{2}n)=O(n)$.  The empirical density estimate requires a $k$-nearest neighbor search in $\mathbb{R}^{D}$, which has complexity $O(n\log(n)^{2}DC_{d})$ using the cover trees nearest neighbors algorithm \cite{Beygelzimer2006_Cover}, where $C_{d}$ is a constant exponential in $d$, the doubling dimension of the underlying data.  Under the FDP hypothesis, the mode detection stage of the algorithm has complexity $O(n\log(n)D)$.  Again by the FDP hypothesis, the labeling stage has complexity $O(n\log(n)D)$.  For each point, determining its spatial nearest neighbors is $O(r^{2})=O(1)$, so all spatial information can be determined with complexity $O(n)$.  Thus, the overall procedure has complexity $O(n\log(n)^{2}DC_{d})$.
\end{proof}

We remark that the theoretical performance of a much simplified version of the proposed method was analyzed \cite{Maggioni2019_Learning} and shown to behave well as a function of the number of active queries.  The proposed version, which directly incorporates spatial regularity into the underlying diffusion process and samples near both cluster cores and cluster boundaries, substantially improves empirical results, though a theoretical analysis is beyond the scope of the present article.

\begin{table}[thb!]
\centering
\begin{adjustbox}{max width=\textwidth}
\begin{tabular}{| c | c | c | c | c | c | c  | c | c | c | c | c | c | c | }\hline
Method & Random SVM & CBAL SVM & MS SVM & MV SVM & EQB SVM & IFRF SVM & LBP & Random LAND & Boundary LAND & Core LAND & SR Random LAND & SR Boundary LAND & SR Core LAND \\ \hline
Salinas A & 0.87 & 18.40 & 3.23 & 20.35 & 6.35 & 2.31 & 17.68 & 10.01 & 9.88  & 10.06 & 13.80 & 13.65 & 13.84   \\ \hline
Indian Pines & 1.18 & 122.52 & 6.14 & 23.65 & 14.28 & 3.38 & 44.21 & 69.41 & 68.91  & 76.30 & 113.61 & 112.92 &  114.61  \\ \hline
\end{tabular}
\end{adjustbox}
\caption{\label{tab:RunTime}The run time in seconds of the different algorithms.  The spatially regularized LAND algorithms are somewhat slower than their non-spatially regularized variants.  Among the comparison methods, many of the SVM methods are faster than the LAND methods, because the former are written in C and the latter in MATLAB.}
\end{table}

\section{Conclusions and Future Research}\label{sec:Conclusions}

This article demonstrates the value in using regularized diffusion processes for active learning.  In particular, the impact of sampling near the estimated cores of the inferred data clusters generated from the spatially regularized random walk is shown to significantly improve not only over state-of-the-art methods for active learning of high-dimensional images, but also over variants of the proposed method that sample near boundaries or without spatial regularization.  We remark that in the case that the underlying data lacks spatial smoothness, either due to highly localized classes or highly diffuse classes, the proposed method may not confer an advantage.  

It is of interest to develop a mathematical model which quantifies the tradeoff between sampling near the cores and sampling near the boundaries.  In the context of image processing, a spatial regularity constraint is more valuable the smoother the class labels are spatially.  Quantifying such a notion of spatial regularity in combination with a cluster model on the image spectra would shed light on the efficacy of the proposed method.  

One may also wish to iteratively refine the boundary queries so that they do not always aggregate in the most ambiguous part of the image; this localization of cluster queries is potentially the reason for the poor performance of the Boundary LAND approach.  While it is certainly valuable to have many samples near the most ambiguous region, there may be situations in which it is more optimal to have a smaller number of boundary queries in the most ambiguous region so that other ambiguous boundaries may be queried as well.  Developing methods for incorporating some boundary points to improve labeling accuracy in this setting is the topic of ongoing inquiry.

\section*{Acknowledgements}  We are grateful to Alex Cloninger and Mauro Maggioni for many helpful discussions related to this work.  This research is partially supported by the US National Science Foundation grants NSF-DMS 1912737 and NSF-DMS 1924513.


\bibliographystyle{unsrt}
\bibliography{SR_LAND.bib}

\end{document}